\title{An Imprecise Probabilistic Estimator for the Transition Rate Matrix of a Continuous-Time Markov Chain}
\titlerunning{An IP Estimator for the Rate Matrix of a CTMC}
\author{Thomas Krak \and Alexander Erreygers \and Jasper De Bock}
\institute{Ghent University, ELIS, SYSTeMS \email{\string{thomas.krak,alexander.erreygers,jasper.debock\string}@ugent.be}}
\newcommand{\nats}{\bbbn}
\newcommand{\natswith}{\nats_{0}}
\newcommand{\reals}{\bbbr}
\newcommand{\realspos}{\reals_{>0}}
\newcommand{\realsnonneg}{\reals_{\geq 0}}
\newcommand{\states}{\mathcal{X}}
\newcommand{\ind}[1]{{\mathbb{I}}_{#1}}
\newcommand{\prev}{\mathbb{E}}
\newcommand{\timedim}{\bbbt}
\newcommand{\partobs}{\widetilde{\omega}}
\newcommand{\discretestep}{\smash{\delta^{(m)}}}
\newcommand{\partobsdisc}{\smash{w^{(m)}}}
\newcommand{\discretecounts}[1]{\smash{n_{#1}^{(m)}}}
\newcommand{\discretetranscount}[1]{\smash{n_{#1}^{(m)}}}
\newcommand{\discretetransmat}{\smash{T^{(m)}}}
\newcommand{\discretetransmatML}{\smash{T^{(m),\mathrm{ML}}}}
\newcommand{\transmatset}{\mathfrak{T}}
\newcommand{\ratematset}{\mathfrak{Q}}
\DeclarePairedDelimiterXPP{\varnorm}[1]{}{\lVert}{\rVert}{_{v}}{#1}
\DeclarePairedDelimiterX\gr[1](){ #1}
\begin{document}

\maketitle

\begin{abstract}
We consider the problem of estimating the transition rate matrix of a continuous-time Markov chain from a finite-duration realisation of this process. We approach this problem in an imprecise probabilistic framework, using a set of prior distributions on the unknown transition rate matrix. The resulting estimator is a set of transition rate matrices that, for reasons of conjugacy, is easy to find. To determine the hyperparameters for our set of priors, we reconsider the problem in discrete time, where we can use the well-known Imprecise Dirichlet Model. In particular, we show how the limit of the resulting discrete-time estimators is a continuous-time estimator. It corresponds to a specific choice of hyperparameters and has an exceptionally simple closed-form expression.
\end{abstract}

\section{Introduction}

Continuous-time Markov chains (CTMCs) are mathematical models that describe the evolution of dynamical systems under (stochastic) uncertainty~\cite{norris1998markov}. They are pervasive throughout science and engineering, finding applications in areas as disparate as medicine, mathematical finance, epidemiology, queueing theory, and others. We here consider time-homogeneous CTMCs that can only be in a finite number of states.

The dynamics of these models are uniquely characterised by a single \emph{transition rate matrix} $Q$. This $Q$ describes the (locally) linearised dynamics of the model, and is the generator of the semi-group of transition matrices $T_t=\exp(Qt)$ that determines the conditional probabilities $P(X_t=y\,\vert\,X_0=x)=T_t(x,y)$. In this expression, $X_t$ denotes the uncertain state of the system at time $t$, and so $T_t$ contains the probabilities for the system to move from any state $x$ at time zero to any state $y$ at time $t$.

In this work, we consider the problem of estimating the matrix $Q$ from a single realisation of the system up to some finite point in time. This problem is easily solved in both the classical frequentist and Bayesian frameworks, due to the likelihood of the corresponding CTMC belonging to an exponential family; see e.g. the introductions of~\cite{inamura2006estimating, bladt2005statistical}. The novelty of the present paper is that we instead consider the estimation of $Q$ in an \emph{imprecise probabilistic}~\cite{Walley:1991vk,augustin2013:itip} context.

Specifically, we approach this problem by considering an entire \emph{set} of Bayesian priors on the likelihood of $Q$, leading to a \emph{set-valued} estimator for $Q$.
In order to obtain well-founded hyperparameter settings for this set of priors, we recast the problem by interpreting a continuous-time Markov chain as a limit of \emph{discrete}-time Markov chains. 
This allows us to consider the imprecise-probabilistic estimators of these discrete-time Markov chains, which are described by the popular Imprecise Dirichlet Model~(IDM)~\cite{Quaeghebeur:2009wm}. The upshot of this approach is that the IDM has well-known prior hyperparameter settings which can be motivated from first principles~\cite{Walley:1996vt,deCooman:2015vj}. 

This leads us to the two main results of this work. First of all, we show that the limit of these IDM estimators is a set of transition rate matrices that can be described in closed-form using a very simple formula. Secondly, we identify the hyperparameters of our imprecise CTMC prior such that the resulting estimator is equivalent to the estimator obtained from this discrete-time limit. The proofs of our results can be found in the appendix.

The immediate usefulness of our results is two-fold. From a domain-analysis point of view, where we are interested in the parameter values of the process dynamics, our imprecise estimator provides prior-insensitive information about these values based on the data. If we are instead interested in robust inference about the future behaviour of the system, our imprecise estimator can be used as the main parameter of an \emph{imprecise continuous-time Markov chain}~\cite{Skulj:2015cq,DeBock:2016,krak2016ictmc,erreygers2017imprecise}. 

\section{A Brief Refresher on Stochastic Processes}\label{subsec:stoch_proc}

Intuitively, a stochastic process describes the uncertainty in a stochastic system's behaviour as it moves through some state space $\states$ as time $t$ progresses over some time dimension $\timedim$. A fundamental choice is whether we are considering processes in discrete time, in which case typically $\timedim=\natswith$, or in continuous time, in which case $\timedim=\realsnonneg$. Here we write $\nats$ for the natural numbers, and let $\natswith\coloneqq \nats\cup\{0\}$. The real numbers are denoted by $\reals$, the positive reals by $\realspos$, and the non-negative reals by $\realsnonneg$. We briefly recall the basic definitions of stochastic processes below; for an introductory work we refer to e.g.~\cite{norris1998markov}. 

Formally, a realisation of a stochastic process is a \emph{sample path}, which is a map $\omega\,:\,\timedim\to\states$. 
Here $\omega(t)\in\states$ represents the state of the process at time $t\in\timedim$. We collect all sample paths in the set $\Omega$ and, when $\timedim=\realsnonneg$, these paths are assumed to be c\`adl\`ag under the discrete topology on $\states$.  
With this domain in place, we then consider some abstract underlying probability space $(\Omega,\mathcal{F},P)$, where $\mathcal{F}$ is some appropriate ($\sigma$-)algebra on $\Omega$, and where $P$ is a (countably-)additive probability measure.

The stochastic process can now finally be defined as a family of random variables $\{X_t\}_{t\in\timedim}$ associated with this probability space. In particular, for fixed $t\in\timedim$, the quantity $X_t$ is a random variable $\Omega\to\states\,:\,\omega\mapsto\omega(t)$. Conversely, for a fixed realisation $\omega\in\Omega$, $X_t(\omega)$ is a deterministic map $\timedim\to\states\,:\,t\mapsto\omega(t)$.

Well-known and popular kinds of stochastic processes are \emph{Markov chains}:
\begin{definition}[Markov Chain]
Fix $\timedim\in\{\natswith,\realsnonneg\}$, and let $\{X_t\}_{t\in\timedim}$ be a stochastic process. We call this process a \emph{Markov chain} if, for all $s_0,\ldots,s_n,s,t\in\timedim$ for which $s_0< \cdots< s_n < s< t$, it holds that
$P(X_t=x_t\,\vert\,X_{s_0}=x_{s_0}, \ldots, X_{s_n}=x_{s_n},X_s=x_s)=P(X_t=x_t\,\vert\,X_s=x_s)$ for all $x_{s_0},\ldots,x_{s_n},x_s,x_t\in\states$.
If then $\timedim=\natswith$, we call $\{X_t\}_{t\in\timedim}$ a \emph{discrete-time Markov chain} (DTMC). If instead $\timedim=\realsnonneg$, we call it a \emph{continuous-time Markov chain} (CTMC).
\end{definition}
Furthermore, attention is often restricted to \emph{homogenous} Markov chains:
\begin{definition}[Homogeneous Markov Chain] Let $\{X_t\}_{t\in\timedim}$ be a Markov chain. We call this Markov chain \emph{(time-)homogeneous} if, for all $s,t\in\timedim$, $s\leq t$,  and all $x,y\in\states$, it holds that $P(X_t=y\,\vert\,X_s=x)=P(X_{(t-s)}=y\,\vert\,X_0=x)$.
\end{definition}
This homogeneity property makes such processes particularly easy to describe. 

In what follows, we will say that a $\lvert\states\rvert\times\lvert\states\rvert$ matrix $T$ is a \emph{transition matrix}, if it is a real-valued and row stochastic matrix, i.e. if $T(x,y)\geq 0$ and $\sum_{z\in\states}T(x,z)=1$ for all $x,y\in\states$. We write $\transmatset$ for the space of all transition matrices. The elements $T$ of $\transmatset$ can be used to describe the single-step conditional probabilities of a (homogeneous) DTMC:
\begin{proposition}[\hspace{-0.5pt}\cite{norris1998markov}]\label{prop:trans_gives_homogen_dtmc}
Let $\{X_t\}_{t\in\natswith}$ be a homogeneous DTMC. Then this process is completely and uniquely characterised by a probability mass function $p$ on $\states$ and some $T\in\transmatset$. In particular, $P(X_0)=p$ and, for all $t\in\natswith$ and all $x,y\in\states$, $P(X_{t}=y\,\vert\,X_0=x)=T^t(x,y)$, where $T^t$ is the $t^{\text{th}}$ matrix power of $T$.
\end{proposition}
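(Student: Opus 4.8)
The plan is to extract the pair $(p,T)$ directly from the process, verify it has the required form, show by induction that it reproduces every finite-dimensional distribution, and conclude by a standard measure-theoretic argument that this pins down the process uniquely.

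First I would set $p\coloneqq P(X_0)$, which is a probability mass function on $\states$ because $X_0$ is a random variable taking values in the finite set $\states$. For the transition matrix, for every $x\in\states$ for which $P(X_s=x)>0$ for some $s\in\natswith$, set $T(x,y)\coloneqq P(X_{s+1}=y\,\vert\,X_s=x)$; by homogeneity this does not depend on the choice of such $s$, so $T$ is well-defined on those rows, and on the remaining rows (states that are never visited, hence irrelevant to the process) it may be fixed arbitrarily within $\transmatset$. Non-negativity and row-stochasticity of $T$ are then immediate from the axioms of $P$, so $T\in\transmatset$.

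Next comes the core computation: I would show $P(X_t=y\,\vert\,X_0=x)=T^t(x,y)$ for all $t\in\natswith$ and all $x,y\in\states$ with $P(X_0=x)>0$, by induction on $t$. The base case $t=0$ is trivial since $T^0$ is the identity, and $t=1$ holds by the definition of $T$. For the inductive step I would apply the law of total probability over the value of $X_t$, then invoke the Markov property to replace $P(X_{t+1}=y\,\vert\,X_0=x,X_t=z)$ by $P(X_{t+1}=y\,\vert\,X_t=z)$, and finally use homogeneity to identify the latter with $T(z,y)$; collecting terms yields $\sum_{z\in\states}T^t(x,z)\,T(z,y)=T^{t+1}(x,y)$, which is exactly the Chapman--Kolmogorov recursion. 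Here one must be a little careful with conditioning on null events: terms for which $P(X_t=z\,\vert\,X_0=x)=0$ contribute nothing and are simply omitted. An entirely analogous factorisation shows that every cylinder probability $P(X_{t_0}=x_{t_0},\ldots,X_{t_n}=x_{t_n})$, with $t_0<\cdots<t_n$, equals $p(x_{t_0})\,T^{t_1-t_0}(x_{t_0},x_{t_1})\cdots T^{t_n-t_{n-1}}(x_{t_{n-1}},x_{t_n})$, a quantity depending only on $p$ and $T$.

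For the remaining assertions, I would note that $\mathcal{F}$ is generated by the cylinder events, which form a $\pi$-system; since $P$ agrees on all of them with the $(p,T)$-induced product formula above, Dynkin's $\pi$--$\lambda$ theorem gives that $P$ is \emph{completely} determined on $\mathcal{F}$ by $p$ and $T$. Conversely, $p$ and $T$ are recovered from the process via $p=P(X_0)$ and $T(x,y)=P(X_1=y\,\vert\,X_0=x)$ on the support of $p$ (and, outside that support, via homogeneity on states that are visited), so the representation is \emph{unique} in the sense claimed. The step I expect to be the main obstacle is the bookkeeping around null events — making the inductive Chapman--Kolmogorov argument rigorous when some conditioning events have probability zero, and phrasing the irrelevance of never-visited states so that the uniqueness claim is literally correct; the rest is a routine application of the Markov property, homogeneity, and the uniqueness-of-measures argument.
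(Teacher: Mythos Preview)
The paper does not supply its own proof of this proposition; it is stated as a classical result and attributed to Norris~\cite{norris1998markov}. There is therefore nothing to compare your argument against here---the appendix proves only Lemma~\ref{lemma:cadlag_has_finite_switches}, Lemma~\ref{lemma:path_props}, and Theorem~\ref{thm:limit_of_idm}.

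That said, your outline is the standard one and is essentially correct: extract $p$ and $T$ from the process, establish $P(X_t=y\,\vert\,X_0=x)=T^t(x,y)$ by induction via Chapman--Kolmogorov, extend to all finite-dimensional distributions, and invoke a $\pi$--$\lambda$ argument for uniqueness of the measure. The only genuine subtlety you have already identified yourself: on rows of $T$ corresponding to states that are never visited with positive probability, the matrix is not determined by the process, so ``uniquely characterised'' must be read as uniqueness of the induced law on $\Omega$ (equivalently, uniqueness of $T$ restricted to reachable states), not literal uniqueness of the pair $(p,T)$. Your handling of null conditioning events in the induction---dropping terms with zero weight---is the usual and adequate fix. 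None of this affects the paper, which simply quotes the result.
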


On the other hand, to describe CTMCs we need the concept of a \emph{(transition) rate matrix}: a $\lvert\states\rvert\times\lvert\states\rvert$ real-valued matrix $Q$ with non-negative off-diagonal elements and zero row-sums, i.e. $Q(x,y)\geq 0$ and $\sum_{z\in\states}Q(x,z)=0$ for all $x,y\in\states$ such that $x\neq y$. We write $\ratematset$ for their entire space. A rate matrix describes the ``speed'' with which a CTMC moves between its states:
\begin{proposition}[\hspace{-0.5pt}\cite{norris1998markov}]\label{prop:rate_gives_homogen_ctmc}
Let $\{X_t\}_{t\in\realsnonneg}$ be a homogeneous CTMC. Then this process is completely and uniquely characterised by a probability mass function $p$ on $\states$ and some $Q\in\ratematset$. In particular, $P(X_0)=p$ and, for all $t\in\realsnonneg$ and all $x,y\in\states$, $P(X_t=y\,\vert\,X_0=x)=\exp(Qt)(x,y)$, where $\exp(Qt)$ is the matrix exponential of $Qt$. Furthermore, for small enough $\Delta\in\realsnonneg$ and all $x,y\in\states$, it holds that $P(X_\Delta=y\,\vert\,X_0=x)\approx (I+\Delta Q)(x,y)$, where $I$ is the identity matrix.
\end{proposition}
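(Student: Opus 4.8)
The plan is to reduce everything to the analytic theory of the matrix-valued semigroup generated by the one-step conditional probabilities. First I would define, for each $t\in\realsnonneg$, the $\card{\states}\times\card{\states}$ matrix $T_t$ by $T_t(x,y)\coloneqq P(X_t=y\,\vert\,X_0=x)$, using the homogeneity and the Markov structure to fix a consistent convention when $P(X_0=x)=0$. By construction each $T_t$ is row stochastic, i.e.\ $T_t\in\transmatset$, and homogeneity gives $P(X_{s+t}=y\,\vert\,X_s=x)=T_t(x,y)$. Combining this with the Markov property and the law of total probability yields the Chapman--Kolmogorov equations $T_{s+t}=T_sT_t$ for all $s,t\in\realsnonneg$, together with $T_0=I$; hence $\{T_t\}_{t\geq 0}$ is a one-parameter multiplicative semigroup in $\transmatset$.

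The analytic core is to show this semigroup is differentiable with a derivative at the origin that is a rate matrix. Here I would invoke the c\`adl\`ag assumption on sample paths together with the finiteness of $\states$ to conclude $\lim_{\Delta\downarrow 0}T_\Delta=I=T_0$ (no instantaneous states), i.e.\ the semigroup is continuous. A classical argument then upgrades continuity at $0$ to the existence of the finite limit $Q\coloneqq\lim_{\Delta\downarrow 0}\tfrac{1}{\Delta}(T_\Delta-I)$: for the diagonal, $t\mapsto-\log T_t(x,x)$ is subadditive (by Chapman--Kolmogorov and positivity), so the limit exists in $[0,\infty]$ and is finite since $\card{\states}<\infty$; the off-diagonal entries are handled by a comparison estimate. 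Since every $T_\Delta$ has non-negative off-diagonal entries while $T_0$ has zero off-diagonal entries, $Q(x,y)\geq 0$ for $x\neq y$; since every $T_\Delta$ has unit row sums, $Q$ has zero row sums; thus $Q\in\ratematset$. Using the semigroup property once more, $\tfrac{1}{h}(T_{t+h}-T_t)=\tfrac{1}{h}(T_h-I)T_t\to QT_t$ as $h\downarrow 0$ (and symmetrically $T_tQ$), so $T_t$ solves the linear ODE $\dot T_t=QT_t$, $T_0=I$, whose unique solution on $\realsnonneg$ is $T_t=\exp(Qt)$.

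It remains to settle the two characterisation claims. The identity $P(X_0)=p$ is simply the definition of the initial mass function. For the stated approximation, the matrix exponential series converges everywhere, so $\exp(Q\Delta)=I+\Delta Q+\tfrac{1}{2}\Delta^2Q^2+\cdots$, whence the remainder beyond the linear term is $O(\Delta^2)$ in any matrix norm, giving $P(X_\Delta=y\,\vert\,X_0=x)\approx(I+\Delta Q)(x,y)$ for small $\Delta$. For ``completely and uniquely characterised'', I would use that a stochastic process is determined by its finite-dimensional distributions, and that the Markov property and homogeneity collapse each of these into $P(X_{t_0}=x_0,\dots,X_{t_n}=x_n)=p(x_0)\prod_{i=1}^{n}\exp\!\big(Q(t_i-t_{i-1})\big)(x_{i-1},x_i)$ for $t_0<\cdots<t_n$; conversely, a Kolmogorov-extension argument shows any pair $(p,Q)$ with $p$ a mass function on $\states$ and $Q\in\ratematset$ produces a consistent such family, hence a homogeneous CTMC.

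The step I expect to be the main obstacle is the analytic one: proving the semigroup is continuous at the origin and that the generator $Q$ exists as a genuine limit with finite entries, thereby excluding explosive or pathological behaviour. On a finite state space under the c\`adl\`ag convention this is standard (cf.~\cite{norris1998markov}), but it is the only place requiring a nontrivial estimate rather than bookkeeping with the Markov property and elementary linear algebra; everything downstream is then routine ODE theory and expansion of the matrix exponential. Accordingly, in the write-up I would either cite the semigroup-regularity result and keep the argument self-contained from the Chapman--Kolmogorov reduction onward, or include the short subadditivity/comparison argument as a lemma.
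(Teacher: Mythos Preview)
The paper does not prove this proposition at all: it is stated with a citation to Norris~\cite{norris1998markov} and used as background, with no argument given in the body or the appendix. So there is no ``paper's own proof'' to compare against; the authors simply import the result.

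That said, your sketch is the standard semigroup route one finds in Norris and similar texts: derive Chapman--Kolmogorov from Markov plus homogeneity, use finiteness of $\states$ and the c\`adl\`ag assumption to get strong continuity, extract the generator $Q$ via subadditivity of $-\log T_t(x,x)$ and a comparison for the off-diagonals, solve the forward/backward equations to get $T_t=\exp(Qt)$, and read off the first-order expansion. The one point you flag as delicate---existence and finiteness of the generator---is exactly the place where a real estimate is needed, and on a finite state space it goes through as you describe. For the purposes of this paper, a one-line citation would have sufficed; your write-up is more than what the authors themselves supply.
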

\quad\vspace{-30pt}

\section{Estimation of a CTMC's Rate Matrix}\label{sec:estimation_ct}

In what follows, we will derive methods to estimate the rate matrix $Q$ of a homogeneous CTMC from a realisation $\omega\in\Omega$ that was observed up to some finite point in time $t_\mathrm{max}\in\realsnonneg$. We denote with $\partobs$ the restriction of $\omega$ to this interval $[0,t_\mathrm{max}]\subset\realsnonneg$, and we consider this (finite-duration) observation to be fixed throughout the remainder of this paper. 

For any $x,y\in\states$ such that $x\neq y$, we let $n_{xy}$ denote the number of transitions from state $x$ to state $y$ in $\partobs$. 
Furthermore, we let $d_x$ denote the total duration spent in state $x$, that is, we let $d_x\coloneqq \int_{0}^{t_\mathrm{max}}\ind{x}(\partobs(t))\,\mathrm{d}t$, where $\ind{x}$ is the indicator of $\{x\}$, defined by $\ind{x}(\partobs(t))\coloneqq 1$ if $\partobs(t)=x$ and $\ind{x}(\partobs(t))\coloneqq 0$ otherwise. We assume in the remainder that $d_x>0$ for all $x\in\states$. Finally, for notational convenience, we define $q_{xy}\coloneqq Q(x,y)$ for all $x,y\in\states$.

\vspace{-2pt}
\subsection{Precise Estimators}\label{subsec:freq_ctmc}

Under the assumption that the realisation $\omega$ was generated by a homogeneous continuous-time Markov chain with rate matrix $Q$, it is well known that the process dynamics can be modelled using exponentially distributed random variables whose parameters are given by the elements of Q. For various of such interpretations, we refer to e.g.~\cite{norris1998markov}. What matters to us here is that, regardless of the interpretation, we can use this to obtain the following likelihood result (see e.g.~\cite{inamura2006estimating}): for a given $\partobs$, the likelihood for a rate matrix $Q$ is 
\vspace{-2pt}
\begin{equation}\label{eq:ctmc_likelihood}
L(\partobs\,\vert\, Q) = \prod_{\substack{x,y\in\states\\x\neq y}}(q_{xy})^{n_{xy}}e^{-q_{xy}d_x}.
\vspace{-4pt}
\end{equation}
The corresponding maximum-likelihood estimator $\smash{Q^\mathrm{ML}}$ is easily found~\cite{inamura2006estimating}: $\smash{q_{xy}^{\mathrm{ML}}}=\nicefrac{n_{xy}}{d_x}$ if $x\neq y$ and $\smash{q_{xx}^\mathrm{ML}}=-\smash{\sum_{y\in\states\setminus\{x\}}q_{xy}^{\mathrm{ML}}}$, where the final expression follows from the (implicit) constraint that the rows of a rate matrix should sum to zero.

Inspection of the likelihood in~\eqref{eq:ctmc_likelihood} reveals that it belongs to an exponential family. This implies that there exists a conjugate prior for the rate matrix $Q$, such that its posterior distribution, given $\partobs$, belongs to the same family as this prior. This prior is given by a product of Gamma distributions, specifically on the off-diagonal elements $q_{xy}$, $x\neq y$, of the corresponding rate matrix~\cite{bladt2005statistical}. We here use a slightly more general joint prior on $Q$ whose ``density'' $f$ is given by
\begin{equation}\label{eq:gamma_prior}
f(Q\,\vert\,\boldsymbol{\alpha},\boldsymbol{\beta}) \coloneqq \prod_{\substack{x,y\in\states\\x\neq y}} (q_{xy})^{\alpha_{xy}-1}e^{-q_{xy}\beta_{x}} \propto \prod_{\substack{x,y\in\states\\x\neq y}} \mathrm{Gamma}(q_{xy}\,\vert\,\alpha_{xy},\beta_{x}),
\vspace{-3pt}
\end{equation}
with shapes $\alpha_{xy}$ and rates $\beta_{x}$ in $\realsnonneg$; we write $\boldsymbol{\alpha},\boldsymbol{\beta}$ for the joint parameters.

Note that we have only defined the prior to equal a product of Gamma distributions up to normalisation, so that the prior $f(Q\,\vert\,\boldsymbol{\alpha},\boldsymbol{\beta})$ may be improper. This has the advantage that it allows us to close the parameter domains and allow prior hyperparameters $\alpha_{xy}=0$ and $\beta_{x}=0$, for which the Gamma distribution is not properly defined. We acknowledge that the use of such improper priors is not entirely uncontroversial, and that their interpretation as a prior probability (which it indeed is not) leaves something to be desired. We will nevertheless, in this specific setting, be able to motivate their use here as a consequence of Theorem~\ref{thm:limit_of_idm} further on.

Also, despite being improper, we can of course combine the prior~\eqref{eq:gamma_prior} with the likelihood~\eqref{eq:ctmc_likelihood} and fix the normalisation in the posterior. As is well known, the means of the marginals of this posterior are then of the form\footnote{The assumption $d_x>0$ prevents division by zero in~\eqref{eq:posterior_gamma_mean}. However, $n_{xy}$ might be zero and, if then also $\alpha_{xy}=0$, the posterior cannot be normalised and will still be improper. Nevertheless, using an intuitive (but formally cumbersome) argument we can still identify this posterior for $q_{xy}$ with the (discrete) distribution putting all mass at zero. Alternatively, we can motivate~\eqref{eq:posterior_gamma_mean} by continuous extension from the cases where $\alpha_{xy}>0$, similarly yielding the estimate $\hat{q}_{xy}=0$ at $\alpha_{xy}=n_{xy}=0$. }
\begin{equation}\label{eq:posterior_gamma_mean}
\prev\bigl[q_{xy}\,\vert\,\boldsymbol{\alpha},\boldsymbol{\beta},\partobs\bigr] = \frac{\alpha_{xy}+n_{xy}}{\beta_{x}+d_x}\quad\quad \forall{x,y\in\states,x\neq y}.
\end{equation}
Furthermore, the (joint) posterior mean is well-known to be a Bayes estimator for $Q$ under quadratic loss and given the prior $f(\,\cdot\,\vert\,\boldsymbol{\alpha},\boldsymbol{\beta})$~\cite{berger1985decisiontheory}.

The question now remains of how to a priori settle on a ``good'' choice for these hyperparameters $\boldsymbol{\alpha},\boldsymbol{\beta}$, in the sense that they should adequately represent our prior beliefs. This is a non-trivial problem, and no general solution can be given. 
A popular (but not uncontroversial) attempt to characterise a non-informative prior consists in choosing the improper prior with $\boldsymbol{\alpha}=\boldsymbol{\beta}=0$; the posterior mean (Bayes) estimator then equals $Q^\mathrm{ML}$.

\vspace{-10pt}
\subsection{An Imprecise Probabilistic Estimator}

Generalising the above Bayesian approach, we here suggest an imprecise probabilistic treatment. Following for example\cite{Walley:1991vk,quaeghebeur2005exponential}, this approach consists in using an entire \emph{set} of prior distributions. Specifically, we consider a set of the form
\begin{equation}\label{eq:set_of_gamma}
\left\{ f(\,\cdot\,\vert\,\boldsymbol{\alpha},\boldsymbol{\beta})\,\Big\vert\,
(\boldsymbol{\alpha}, \boldsymbol{\beta})\in C \right\},
\end{equation}
with $f(\,\cdot\,\vert\,\boldsymbol{\alpha},\boldsymbol{\beta})$ as in~\eqref{eq:gamma_prior}, and where $C$ is a set of possible prior parameters. In this way, we do not have to restrict our attention to one specific choice of the parameters $\boldsymbol{\alpha},\boldsymbol{\beta}$; rather, we can include all the parameter settings that we deem reasonable, by collecting them in $C$. Inference from $\partobs$ is then performed by point-wise updating each of these priors; we thereby obtain a set of posterior distributions on the space of all rate matrices. Each of these posteriors has a mean of the form~\eqref{eq:posterior_gamma_mean}, which is a Bayes estimator for $Q$ under a specific prior in the set~\eqref{eq:set_of_gamma}. This leads us to consider the imprecise, i.e., \emph{set-valued}, estimator
\begin{align*}
\mathcal{Q}_C\coloneqq \Biggl\{ &Q\in\ratematset \Bigg\vert \left( \forall{x,y\in\states,x\neq y} :q_{xy}=\frac{\alpha_{xy}+n_{xy}}{\beta_{x}+d_x} \right), (\boldsymbol{\alpha},\boldsymbol{\beta})\in C \Biggr\}\,.
\end{align*}

Note that even in this imprecise probabilistic approach, we still need to somehow specify the (now set-valued) prior model. That is, we need to be specific about the set $C$. Inspired by the well-known imprecise Dirichlet model~\cite{Walley:1996vt}, we may choose an ``imprecision parameter'' $s\in\realsnonneg$, which can be interpreted as a number of ``pseudo-counts'', to constrain $0\leq \sum_{y\in\states\setminus\{x\}}\alpha_{xy}\leq s$ for all $x\in\states$, and to then vary all $\beta_{x}$ over their domain $\realsnonneg$.    
Unfortunately, similar to what is noted in~\cite{quaeghebeur2005exponential}, this leads to undesirable behaviour. For example, as is readily seen from e.g.~\eqref{eq:posterior_gamma_mean}, including unbounded $\beta_{x}$ allows the off-diagonal elements $q_{xy}$ to get arbitrarily close to zero, causing the model to a posteriori believe that transitions leaving $x$ may be impossible, no matter the number of such transitions that we actually observed in $\partobs$! Hence, we prefer a different choice of $C$. 

One way to circumvent this undesired behaviour is to constrain the range within which each $\beta_x$ may be varied, to some interval $[0,\overline{\beta}_x]$, say. The downside is that this introduces a large number of additional hyperparameters; we then need to (``reasonably'') choose a value $\overline{\beta}_x\in\realsnonneg$ for each $x\in\states$. Fortunately, our main result -- Theorem~\ref{thm:limit_of_idm} further on -- suggests that setting $\overline{\beta}_x=0$ (and therefore $\beta_x=0$) is in fact a very reasonable choice. This identification is obtained in the next section, using a limit result of discrete-time estimators, for which the hyperparameter settings follow entirely from first principles.

In summary, we keep the ``imprecision parameter'' $s\in\realsnonneg$ and the constraint $0\leq \sum_{y\in\states\setminus\{x\}}\alpha_{xy}\leq s$ for all $x\in\states$, and simply set $\beta_x=0$ for all $x\in\states$. We then define $C_s$ to be the largest set of parameters that satisfies these properties.
Every $\boldsymbol{\alpha}$ in this set can be conveniently identified with the off-diagonal elements of a matrix $sA$, with $A\in\transmatset$ a transition matrix. Our set-valued estimator $\mathcal{Q}_s$ can thus be written as
\vspace{-4pt}
\begin{equation}\label{eq:proposed_estimator}
\mathcal{Q}_s \coloneqq \left\{ Q\in\ratematset \,\Bigg\vert\, \left(\forall x,y\in\states,x\neq y\,:\,q_{xy}=\frac{sA(x,y)+n_{xy}}{d_x}\right),\, A\in\transmatset\right\}.
\vspace{-10pt}
\end{equation}

\section{Discrete-Time Estimators and Limit Relations}

A useful intuition is that we can consider a CTMC as a limit of DTMCs, where we assign increasingly shorter durations to the time steps at which the latter operate. In this section, we will use this connection to relate estimators for DTMCs to estimators for CTMCs. We start by discretising the observed path.

Because the realisation $\omega$ was only observed up to some time $t_\mathrm{max}\in\realsnonneg$, we can discretise the (finite-duration) realisation $\partobs$ into a finite number of steps. For any $m\in\nats$, we write $\discretestep\coloneqq \nicefrac{t_\mathrm{max}}{m}$, and we define the discretised path $\partobsdisc\,:\,\{0,\ldots,m\}\to\states$ as $\partobsdisc(i)\coloneqq \partobs\left(i\discretestep\right)$ for all $i\in\{0,\ldots, m\}$.

For any $m\in\nats$ and $x,y\in\states$, we let $\discretetranscount{xy}\coloneqq\sum_{i=1}^m\ind{x}(\partobsdisc(i-1))\ind{y}(\partobsdisc(i))$ denote the number of transitions from state $x$ to $y$ in $\partobsdisc$, and we let $\discretecounts{x}\coloneqq \sum_{y\in\states}\discretetranscount{xy}$ denote the total number of time steps that started in state $x$. 

\vspace{-10pt}
\subsection{Discrete-Time Estimators}

For fixed $m\in\nats$, we can interpret the discretised path $\partobsdisc$ as a finite-duration ($m+1$ steps long) realisation of a homogeneous discrete-time Markov chain with transition matrix $\discretetransmat$, with $m$ keeping track of the discretisation level. Each transition along the path $\partobsdisc$, from state $x$ to $y$, say, is then a realisation of a categorical distribution with parameters $\discretetransmat(x,\cdot)$. The likelihood for $\discretetransmat$, given $\partobsdisc$, is therefore proportional to a product of independent multinomial likelihoods. Hence, the maximum likelihood estimator follows straightforwardly and as expected: $\discretetransmatML(x,y)=\nicefrac{\discretetranscount{xy}}{\discretecounts{x}}$ for all $x,y\in\states$; see~\cite{inamura2006estimating} for details.

In a Bayesian analysis, and following e.g.~\cite{Quaeghebeur:2009wm}, for fixed $m$ we can model our uncertainty about the unknown $\discretetransmat$ by putting independent Dirichlet priors on the rows $\discretetransmat(x,\cdot)$. We write this prior as $g(\cdot\,\vert\,s,A)$, where $s\in\realsnonneg$ is a ``prior strength'' parameter, and $A\in\mathrm{int}(\transmatset)$ is a prior location parameter. 
Note that we take $A$ in the interior of $\transmatset$ -- under the metric topology on $\transmatset$ -- so that each row $A(x,\cdot)$ corresponds to a strictly positive probability mass function.

After updating with $\partobsdisc$, the posterior mean is an estimator for $\discretetransmat$ that is Bayes under quadratic loss and for the specific prior $g(\,\cdot\,\vert\,s,A)$; due to conjugacy, the posterior is again a product of independent Dirichlet distributions~\cite{Quaeghebeur:2009wm}, whence the elements of the posterior mean are
\begin{equation*}
\mathbb{E}\Bigl[T(x,y)\,\Big\vert\,s,A,\partobsdisc\Bigr] = \frac{sA(x,y)+\discretetranscount{xy}}{s+n_x^{(m)}}\quad\quad\forall x,y\in\states\,.
\end{equation*}
\vspace{-10pt}

What remains is again to determine a good choice for $s$ and $A$. However, in an imprecise probabilistic context we do not have to commit to any such choice: the popular Imprecise Dirichlet Model generalises the above approach using a \emph{set} of Dirichlet priors.  
This set is given by $\smash{\mathrm{IDM}(\,\cdot\,\vert\,s) \coloneqq \bigl\{ g(\,\cdot\,\vert\,s,A)\,\big\vert\, A\in\mathrm{int}(\transmatset) \bigr\}}$ and can be motivated from first principles~\cite{Walley:1996vt,deCooman:2015vj}.
Observe that only a parameter $s\in\realsnonneg$ remains, which controls the ``degree of imprecision''. In particular, we no longer have to commit to a location parameter $A$; instead this parameter is freely varied over its entire domain $\mathrm{int}(\transmatset)$.

Element-wise updating with $\partobsdisc$ yields a set of posteriors which, due to conjugacy, are again independent products of Dirichlet distributions. 
The corresponding set $\smash{\mathcal{T}^{(m)}_s}$ of posterior means thus contains estimators for $\discretetransmat$ that are Bayes for a specific prior from the IDM, and is easily verified to be
\begin{equation*}
\mathcal{T}^{(m)}_s = \left\{ T\in\transmatset \,\Bigg\vert\, \left(\forall{x,y\in\states}:T(x,y)=\frac{sA(x,y)+n_{xy}^{(m)}}{s+n_{x}^{(m)}} \right), A\in\mathrm{int}(\transmatset)\right\}\,.
\end{equation*}
\vspace{-20pt}
\subsection{Limits of Discrete-Time Estimators}

As noted in Proposition~\ref{prop:rate_gives_homogen_ctmc}, a rate matrix $Q$ is connected to the transition probabilities $T_\Delta(x,y)\coloneqq P(X_\Delta=y\,\vert\,X_0=x)$ in the sense that $T_\Delta\approx (I+\Delta Q)$ for small $\Delta$. Hence, for small $\Delta$, we have that $Q\approx(T_\Delta - I)\nicefrac{1}{\Delta}$. This becomes exact in the limit for $\Delta$ going to zero.

This interpretation can also be used to connect discrete-time estimators for $\discretetransmat$ to estimators for $Q$. 
For example, if we let $\smash{Q^{(m)}}\coloneqq (\discretetransmatML-I)\nicefrac{1}{\discretestep}$, then $Q^\mathrm{ML}=\lim_{m\to+\infty}Q^{(m)}$.
Similarly, we can connect our set-valued estimators for the discretised models to the set-valued continuous-time estimator in~\eqref{eq:proposed_estimator}: 
\begin{theorem}\label{thm:limit_of_idm}
For all $m\in\nats$, let $\smash{\mathcal{Q}^{(m)}_s}\coloneqq \smash{\bigl\{(T-I)\nicefrac{1}{\discretestep}\,\big\vert\,T\in\mathcal{T}^{(m)}_s\bigr\}}$. Then the Painlev\'e-Kuratowski\cite{rockafellar1998variational} limit $\lim_{m\to+\infty}\mathcal{Q}^{(m)}_s$ exists, and equals $\mathcal{Q}_s$.
\end{theorem}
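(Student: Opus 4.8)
The plan is to establish the two set inclusions that together characterise Painlev\'e-Kuratowski convergence of $\mathcal{Q}^{(m)}_s$ to $\mathcal{Q}_s$: that $\mathcal{Q}_s$ is contained in the inner limit $\liminf_{m\to+\infty}\mathcal{Q}^{(m)}_s$ (every $Q\in\mathcal{Q}_s$ is a limit of a sequence $Q^{(m)}\in\mathcal{Q}^{(m)}_s$), and that the outer limit $\limsup_{m\to+\infty}\mathcal{Q}^{(m)}_s$ is contained in $\mathcal{Q}_s$ (every cluster point of sequences drawn from the sets $\mathcal{Q}^{(m)}_s$ lies in $\mathcal{Q}_s$). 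Since the inner limit is always a subset of the outer limit, these two inclusions force all three sets to coincide, which simultaneously proves that the limit exists and that it equals $\mathcal{Q}_s$.

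The first step would be to pin down the asymptotics of the discretised counts. Since $\partobs$ is c\`adl\`ag into the finite state space $\states$ under the discrete topology, it has only finitely many jumps on the compact interval $[0,t_\mathrm{max}]$ -- existence of left limits rules out an accumulation of jumps from the left, right-continuity rules one out from the right, and compactness then bounds the total number -- so $\partobs$ is piecewise constant with finitely many pieces. Two consequences follow. First, $\discretestep\,\discretecounts{x}$ is the left Riemann sum, with mesh $\discretestep\to0$, of the Riemann-integrable map $t\mapsto\ind{x}(\partobs(t))$ on $[0,t_\mathrm{max}]$, so $\discretestep\,\discretecounts{x}\to d_x>0$, and in particular $\discretecounts{x}\to+\infty$. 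Second, once $\discretestep$ is smaller than both the smallest gap between consecutive jump times and the shortest sojourn duration, each discretisation subinterval contains at most one jump, and each jump from $x$ to $y$ with $x\neq y$ is witnessed by exactly one subinterval whose left endpoint is $x$ and whose right endpoint is $y$ (with right-continuity handling the case of a jump time landing on a grid point); hence $\discretetranscount{xy}=n_{xy}$ for all large $m$ whenever $x\neq y$.

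With these in hand, I would note that, fixing $x\neq y$, an element of $\mathcal{Q}^{(m)}_s$ has off-diagonal entry
\begin{equation*}
q^{(m)}_{xy}=\frac{1}{\discretestep}\cdot\frac{sA^{(m)}(x,y)+\discretetranscount{xy}}{s+\discretecounts{x}}=\frac{sA^{(m)}(x,y)+\discretetranscount{xy}}{\discretestep\,s+\discretestep\,\discretecounts{x}}
\end{equation*}
for some $A^{(m)}\in\mathrm{int}(\transmatset)$ (and for $m$ large, so that the denominator is positive). For $\mathcal{Q}_s\subseteq\liminf_m\mathcal{Q}^{(m)}_s$, I would take $Q\in\mathcal{Q}_s$ with $q_{xy}=\nicefrac{(sA(x,y)+n_{xy})}{d_x}$ for some $A\in\transmatset$, use that $\mathrm{int}(\transmatset)$ is dense in $\transmatset$ to choose $A^{(m)}\in\mathrm{int}(\transmatset)$ with $A^{(m)}\to A$, and let $Q^{(m)}\in\mathcal{Q}^{(m)}_s$ be the associated estimator; the display together with the count asymptotics yields $q^{(m)}_{xy}\to\nicefrac{(sA(x,y)+n_{xy})}{d_x}=q_{xy}$ for $x\neq y$, and since $Q^{(m)}$ and $Q$ are both rate matrices (the definition of $\mathcal{Q}^{(m)}_s$ guarantees $(T-I)\nicefrac{1}{\discretestep}\in\ratematset$) the diagonal entries converge as well, so $Q^{(m)}\to Q$. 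For $\limsup_m\mathcal{Q}^{(m)}_s\subseteq\mathcal{Q}_s$, I would take $Q=\lim_k Q^{(m_k)}$ along a subsequence with $Q^{(m_k)}\in\mathcal{Q}^{(m_k)}_s$ witnessed by matrices $A^{(m_k)}\in\mathrm{int}(\transmatset)$, pass to a further subsequence along which $A^{(m_k)}\to A\in\transmatset$ by compactness of $\transmatset$, and let $k\to+\infty$ in the display to obtain $q_{xy}=\nicefrac{(sA(x,y)+n_{xy})}{d_x}$ for $x\neq y$; since $\ratematset$ is closed we also have $Q\in\ratematset$, whence $Q\in\mathcal{Q}_s$.

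The main obstacle is the count lemma -- specifically the claim that $\discretetranscount{xy}$ is \emph{eventually exactly} $n_{xy}$ for $x\neq y$. One must argue that only finitely many jumps occur, rule out transitions spuriously produced by two jumps falling in a single subinterval or by an entire short sojourn being skipped, and treat the degenerate case of a jump occurring precisely at a grid point. Everything after that is a routine passage to the limit in the explicit formula above, powered by the two topological facts that $\mathrm{int}(\transmatset)$ is dense in $\transmatset$ (needed for the inner-limit inclusion) and that $\transmatset$ is compact (needed for the outer-limit inclusion).
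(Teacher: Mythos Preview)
Your proposal is correct and follows essentially the same route as the paper's proof: a count-asymptotics lemma (finitely many jumps from the c\`adl\`ag property, $\discretestep\discretecounts{x}\to d_x$ via a Riemann-sum argument, and $\discretetranscount{xy}=n_{xy}$ eventually for $x\neq y$), followed by the two Painlev\'e--Kuratowski inclusions using density of $\mathrm{int}(\transmatset)$ in $\transmatset$ for the inner limit and compactness of $\transmatset$ for the outer limit. The only cosmetic differences are that the paper verifies directly that the limit matrix lies in $\ratematset$ rather than invoking closedness of $\ratematset$, and computes the diagonal entries explicitly rather than inferring their convergence from the zero-row-sum constraint.
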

\quad\vspace{-30pt}

\section{Discussion}

We have derived a set-valued estimator $\mathcal{Q}_s$ for the transition rate matrix of a homogeneous CTMC.  
It can be motivated both as a set of posterior means of a set of Bayesian models in continuous-time, and as a limit of set-valued discrete-time estimators based on the Imprecise Dirichlet Model. 
The only parameter of the estimator is a scalar $s\in\realsnonneg$ that controls the degree of imprecision. In the special case where $s=0$ there is no imprecision, and then $\smash{\mathcal{Q}_0}=\{\smash{Q^\mathrm{ML}}\}$.

The set-valued representation $\mathcal{Q}_s$ is convenient when one is interested in the numerical values of the transition rates, e.g. for domain-analysis. If one aims to use the estimator to describe an \emph{imprecise CTMC}~\cite{Skulj:2015cq,krak2016ictmc}, a representation using the \emph{lower transition rate operator} $\underline{Q}$ is more convenient.  
This operator is the lower envelope of a set of rate matrices; for $\smash{\mathcal{Q}_s}$ it is given, for all $h:\states\to\reals$, by
\begin{equation*}
\bigl[\,\underline{Q}h\bigr](x) \coloneqq \hspace{-5pt}\inf_{Q\in \mathcal{Q}_s} \sum_{y\in\states}\hspace{-3pt}Q(x,y)h(y) = \frac{s}{d_x}\min_{y\in\states}\bigl(h(y)-h(x)\bigr) + \hspace{-5pt}\sum_{y\in\states\setminus\{x\}}\hspace{-5pt}\frac{n_{xy}}{d_x}\bigl(h(y)-h(x)\bigr),
\vspace{-6pt}
\end{equation*}
for all $x\in\states$. Hence, $\underline{Q}h$ is straightforward to evaluate. This implies that when our estimator is used to learn an imprecise CTMC from data, the lower expectations of this imprecise CTMC can be computed efficiently~\cite{erreygers2017imprecise}. 
\vspace{-12pt}

	\section*{Acknowledgements}
The work in this paper was partially supported by H2020-MSCA-ITN-2016 UTOPIAE, grant agreement 722734. The authors wish to thank two anonymous reviewers for their helpful comments and suggestions.
\vspace{-8pt}

\newpage
\appendix

\section{Proofs of Main Results}

In this appendix, we will assume that the paths $\omega\in\Omega$ are functions in continuous-time, that is, that $\timedim=\realsnonneg$. As stated in Section~\ref{subsec:stoch_proc}, we then assume all these paths to be c\`adl\`ag under the discrete topology on $\states$.
 
Therefore, and because the realisation $\partobs$ is only observed up to time $t_\mathrm{max}$, there are only a finite number of state transitions in $\partobs$ and, furthermore, each distinct visit lasts for a strictly positive (but finite) duration; the lemma below makes this formal. The result is essentially well-known, but we had some trouble finding a satisfactory reference for the elementary case where $\partobs$ takes at most finitely many values; we therefore prove it as a (somewhat trivial) specialisation of~\cite[Theorem 12.2.1]{whitt2002stochastic}.
\begin{lemma}\label{lemma:cadlag_has_finite_switches}
Let $\partobs$ be c\`adl\`ag. Then there is a finite collection of time points $t_i\in [0,t_\mathrm{max}]$, $i=0,\ldots,M$, with $t_0=0$, $t_M=t_\mathrm{max}$, and $t_i<t_j$ if $i<j$, such that $\partobs(t)$ is constant on $[t_{i},t_{i+1})$ for all $i=0,\ldots,M-1$, and $\partobs(t_{i-1})\neq \partobs(t_i)$ for all $i=1,\ldots,M-1$.
\end{lemma}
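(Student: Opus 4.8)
The plan is to first show that $\partobs$ can have only finitely many \emph{jump points} --- i.e.\ points $t$ where $\partobs(t)\neq\partobs(t-)$ --- and then to read the desired partition off from these. For the finiteness, I would argue by contradiction and compactness: if the set $D := \{t\in(0,t_\mathrm{max}] : \partobs(t)\neq\partobs(t-)\}$ were infinite, it would have an accumulation point $t^*\in[0,t_\mathrm{max}]$, and hence a sequence $(u_n)$ of distinct points of $D$ converging to $t^*$. Passing to a subsequence, either $u_n\uparrow t^*$ or $u_n\downarrow t^*$. In the first case, the existence of the left limit $\partobs(t^*-)$ under the discrete topology forces $\partobs$ to be constant, equal to $\partobs(t^*-)$, on some interval $(t^*-\eta,t^*)$; but then $\partobs(u_n)=\partobs(u_n-)$ for all large $n$, contradicting $u_n\in D$. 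In the second case, right-continuity at $t^*$ forces $\partobs$ to be constant on some $[t^*,t^*+\eta)$, again contradicting $u_n\in D$ for large $n$. (Alternatively, this finiteness is exactly the specialisation of \cite[Theorem 12.2.1]{whitt2002stochastic} to the discrete metric on $\states$, for which every genuine jump has size $1$.)

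Next, I would enumerate the jump points lying strictly inside $(0,t_\mathrm{max})$ as $s_1<\cdots<s_{M-1}$ and set $t_0:=0$, $t_i:=s_i$ for $1\leq i\leq M-1$, and $t_M:=t_\mathrm{max}$. This is a strictly increasing sequence in $[0,t_\mathrm{max}]$ with the prescribed endpoints, and it remains to check the two asserted properties.

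For constancy of $\partobs$ on $[t_i,t_{i+1})$, fix such an interval, write $a:=t_i$ and $b:=t_{i+1}$, and note that by construction $\partobs$ has no jump point in the open interval $(a,b)$. I would set $t^*:=\sup\{t\in[a,b) : \partobs \text{ is constant on } [a,t]\}$; right-continuity of $\partobs$ at $a$ gives $t^*>a$. If $t^*<b$, then $\partobs$ equals $\partobs(a)$ on $[a,t^*)$, so $\partobs(t^*-)=\partobs(a)$, and since $t^*\in(a,b)$ is not a jump point, $\partobs(t^*)=\partobs(t^*-)=\partobs(a)$; right-continuity at $t^*$ then yields $\varepsilon>0$ with $t^*+\varepsilon<b$ such that $\partobs$ is still constant on $[a,t^*+\varepsilon]$, contradicting the choice of $t^*$. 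Hence $t^*=b$, and therefore $\partobs$ is constant on $[a,t]$ for every $t<b$, i.e.\ on $[t_i,t_{i+1})$.

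Finally, for $\partobs(t_{i-1})\neq\partobs(t_i)$ with $1\leq i\leq M-1$: by the previous step $\partobs$ is constant and equal to $\partobs(t_{i-1})$ on $[t_{i-1},t_i)$, so $\partobs(t_i-)=\partobs(t_{i-1})$; since $t_i=s_i$ is by construction a jump point, $\partobs(t_i)\neq\partobs(t_i-)=\partobs(t_{i-1})$. The only genuinely non-routine ingredient is the finiteness of $D$; the remaining work is elementary, the one mild subtlety being that $t_\mathrm{max}$ need not be a continuity point but enters only through the half-open interval $[t_{M-1},t_\mathrm{max})$, so this causes no difficulty.
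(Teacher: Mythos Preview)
Your proof is correct. The overall structure---first establish finiteness of the jump set, then enumerate the jumps and verify the two properties---matches the paper's, but the ingredients differ. The paper obtains finiteness by embedding $\states$ into $\reals$, observing that c\`adl\`ag under the discrete topology implies c\`adl\`ag under the Euclidean topology, and then invoking \cite[Theorem~12.2.1]{whitt2002stochastic} as a black box; constancy on each subinterval is then immediate because a continuous function into a finite set is constant. You instead prove finiteness from scratch via the accumulation-point argument, and establish constancy on $[t_i,t_{i+1})$ by an explicit supremum argument. Your route is more self-contained and avoids the external reference (which you do mention as an alternative), at the cost of a little more writing; the paper's route is shorter but leans on Whitt. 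Both handle the subtlety at $t_M=t_\mathrm{max}$ in essentially the same way, by only requiring the inequality $\partobs(t_{i-1})\neq\partobs(t_i)$ for $i\leq M-1$.
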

\begin{proof}
By assumption $\partobs$ is c\`adl\`ag on $[0,t_\mathrm{max}]$ under the discrete topology on $\states$. Since $\states$ is finite, we can identify it without loss of generality with the set $\{1,\ldots,k\}\subset\nats$, with $k=\lvert\states\rvert$ the number of states.
Now let $r:[0,t_\mathrm{max}]\to\reals$ be defined as $r(t)\coloneqq\partobs(t)$ for all $t\in[0,t_\mathrm{max}]$, so that $r$ takes values in $\{1,\ldots,k\}\subset\nats\subset\reals$. Then $r$ is simply $\partobs$ with its co-domain $\states$ replaced by $\reals$; $r$ is therefore by construction c\`adl\`ag under the discrete topology on $\reals$. 

For any $x\in\reals$, any open neighbourhood $U_x\subseteq\reals$ of $x$ in the Euclidean topology contains the set $\{x\}$, which is an open neighbourhood of $x$ in the discrete topology. It follows that any sequence $\{x_i\}_{i\in\nats}$ in $\reals$ that is convergent with limit $x_*$ in the discrete topology, is also convergent with limit $x_*$ in the Euclidean topology; the sequence $\{x_i\}_{i\in\nats}$ is eventually in $\{x_*\}\subset U_{x_*}$ for any open neighbourhood $U_{x_*}$ of $x_*$ in the Euclidean topology. Therefore the left-sided limits and right-continuity of $r$ under the discrete topology, hold identically under the Euclidean topology; so $r$ is also c\`adl\`ag under the Euclidean topology on $\reals$.

By~\cite[Theorem 12.2.1]{whitt2002stochastic}, $r$ has at most a finite number of discontinuities, under the Euclidean norm on $\reals$, on the interval $[0,t_\mathrm{max}]$. Denote these points of discontinuity as $t_1,\ldots,t_{M'}$, and assume without loss of generality that $t_i<t_j$ for all $i,j\in\{1,\ldots,M'\}$ for which $i<j$. We next include the endpoints of the interval. Note first that $t_1>0$, because time $0$ cannot be a point of discontinuity due to the c\`adl\`ag property; we can therefore introduce $t_0\coloneqq 0$. For the endpoint $t_\mathrm{max}$ we need to consider two cases, because there is possibly a discontinuity there. If $t_{M'}\neq t_\mathrm{max}$ there is no such discontinuity, whence we introduce $t_M\coloneqq t_\mathrm{max}$ and set $M\coloneqq M'+1$; otherwise we simply let $M\coloneqq M'$. We will now verify the properties in the lemma's statement.

Clearly, by construction, we have that $t_i\in[0,t_\mathrm{max}]$ for all $i=0,\ldots,M$, that $t_0=0$ and $t_M=t_\mathrm{max}$ and that $t_i<t_j$ if $i<j$. Now fix any $i\in\{0,\ldots,M-1\}$; we know that $r$ has no discontinuities on the interval $(t_i,t_{i+1})$ under the Euclidean norm on $\reals$, and therefore, since $r$ is right-continuous at $t_i$, it has no discontinuities on $[t_i,t_{i+1})$ either. Since $r$ only takes values in $\{1,\ldots,k\}$, it follows that $r$ must be constant on $[t_i,t_{i+1})$. This implies that also $\partobs$ is constant on $[t_i,t_{i+1})$. 

Finally, choose any $i\in\{1,\ldots,M-1\}$. There is then a discontinuity in $r$ at $t_{i}$. Since, as we have just shown, $r$ is constant on both $[t_{i-1},t_i)$ and $[t_{i},t_{i+1})$, this implies that $r(t_{i-1})\neq r(t_i)$, and therefore also $\partobs(t_{i-1})\neq \partobs(t_i)$.\qed
\end{proof}

In other words, for $i\in\{1,\ldots,M-1\}$, the time points $t_i$ are the distinct time points on which state-changes occurred in $\partobs$, and the intervals $[t_{i-1},t_i)$ are time intervals during which the process remained in the same state it had at time $t_{i-1}$. The boundaries $t_0$ and $t_M$ constitute special cases that are included for use in the proof of the next lemma. Notably, there is never a state-change at $t_0=0$, and there might, but need not be, a state-change at time $t_M=t_\mathrm{max}$.

The above guarantees that the properties of the discretised realisations $\partobsdisc$ converge to the properties of the original $\partobs$. 
Unfortunately, a straightforward statement of the results that we need is (as before, apparently) so elementary that we are unable to find a satisfactory reference. We therefore provide an explicit proof below.
\begin{lemma}\label{lemma:path_props}
For any c\`adl\`ag $\partobs$ and all $x\in\states$:
\begin{enumerate}[label=(\roman*)]
\item $n_{xy}=\lim_{m\to+\infty} \discretetranscount{xy}$ for all $y\in\states\setminus\{x\}$;\label{lemma:path_props:partition_fine_enough}
\item $d_x = \lim_{m\to+\infty}\discretestep\discretecounts{x}$.\label{lemma:path_props:durations_converge}
\end{enumerate}
\end{lemma}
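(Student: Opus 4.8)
The plan is to reduce both claims to the finite, piecewise-constant picture supplied by Lemma~\ref{lemma:cadlag_has_finite_switches}. Apply that lemma to obtain the points $t_0 = 0 < t_1 < \cdots < t_M = t_\mathrm{max}$ on which $\partobs$ is constant on each $[t_i, t_{i+1})$, and set $\epsilon \coloneqq \min_{0 \le i < M}(t_{i+1}-t_i) > 0$. Since $\discretestep = \nicefrac{t_\mathrm{max}}{m} \to 0$, it suffices to control the quantities for discretisation levels $m$ that are fine enough, i.e.\ with $\discretestep < \epsilon$; the key consequence of this bound is that any half-open cell $((i-1)\discretestep,\, i\discretestep]$ of the grid contains \emph{at most one} of the points $t_0,\dots,t_M$.

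For part~\ref{lemma:path_props:partition_fine_enough}, note that $\discretetranscount{xy}$ counts the indices $i \in \{1,\dots,m\}$ with $\partobs((i-1)\discretestep) = x$ and $\partobs(i\discretestep) = y$. Because $\partobs$ is constant on each $[t_j,t_{j+1})$, such an index occurs only if the cell $((i-1)\discretestep,\, i\discretestep]$ contains some switch point $t_j$, and by the spacing bound it then contains exactly this one; moreover $\discretestep < \epsilon$ forces $(i-1)\discretestep \in [t_{j-1},t_j)$ and $i\discretestep \in [t_j,t_{j+1})$ (or $i\discretestep = t_\mathrm{max}$ when $t_j = t_M$), so that $\partobs((i-1)\discretestep)$ and $\partobs(i\discretestep)$ are precisely the pre- and post-jump values at $t_j$. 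Conversely every genuine jump time lies in exactly one cell of the grid. Hence, for every $m$ with $\discretestep < \epsilon$, $\discretetranscount{xy}$ equals the number of switch points at which $\partobs$ jumps from $x$ to $y$, which is exactly $n_{xy}$; so $\discretetranscount{xy}$ is eventually constant and equal to $n_{xy}$, and in particular converges to it.

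For part~\ref{lemma:path_props:durations_converge}, observe that $\discretecounts{x} = \sum_{y\in\states}\discretetranscount{xy} = \sum_{i=1}^{m}\ind{x}(\partobs((i-1)\discretestep))$ is just the number of the grid points $0,\discretestep,\dots,(m-1)\discretestep$ at which $\partobs = x$, so that $\discretestep\,\discretecounts{x}$ is the left-endpoint Riemann sum, over the uniform mesh of width $\discretestep$, of the bounded function $t \mapsto \ind{x}(\partobs(t))$. By Lemma~\ref{lemma:cadlag_has_finite_switches} the set $\{t \in [0,t_\mathrm{max}] : \partobs(t) = x\}$ is a union of at most $M$ intervals with endpoints among the $t_j$, of total length $d_x$. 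Counting the grid points in each such interval, the count differs from (its length)$/\discretestep$ by less than $1$, so $\lvert \discretestep\,\discretecounts{x} - d_x\rvert \le M\discretestep \to 0$; equivalently, one may simply invoke Riemann integrability of a bounded function with finitely many discontinuities.

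The only genuine work — and the step I would treat most carefully — is the bookkeeping behind part~\ref{lemma:path_props:partition_fine_enough}: turning the uniform spacing bound $\epsilon$ into the exact identity ``one switch per occupied cell, one cell per switch'' requires tracking which half-open cell a switch point is assigned to, including the boundary case of a possible jump of $\partobs$ at $t_\mathrm{max} = m\discretestep$, and checking that for $\discretestep < \epsilon$ the two endpoints of an occupied cell fall in the two adjacent constancy intervals. Part~\ref{lemma:path_props:durations_converge} is then routine.
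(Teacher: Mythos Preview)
Your proposal is correct and follows essentially the same approach as the paper: both invoke Lemma~\ref{lemma:cadlag_has_finite_switches}, use the minimum gap between switch points to guarantee that each grid cell sees at most one jump once the mesh is fine enough, and treat part~\ref{lemma:path_props:durations_converge} as a Riemann-sum argument. The only cosmetic difference is that you establish $n_{xy}^{(m)}=n_{xy}$ via a direct bijection between jumps and occupied cells, whereas the paper argues $n_{xy}^{(m)}\geq n_{xy}$ by construction and $n_{xy}^{(m)}\leq n_{xy}$ ``trivially''; your version is arguably cleaner, since the latter inequality is not entirely trivial without the one-jump-per-cell observation you already made.
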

\begin{proof}
Let $t_i\in[0,t_\mathrm{max}]$, $i=0,\ldots,M$ be the finite set of time points 
whose existence is guaranteed by Lemma~\ref{lemma:cadlag_has_finite_switches}. Let $\Delta\coloneqq \min_{i\in\{1,\ldots,M\}}t_i-t_{i-1}$ be the 
minimum distance between these time points.

We start by proving Property~\ref{lemma:path_props:partition_fine_enough}. Fix $x,y\in\states$ such that $x\neq y$. Because $\discretestep=\nicefrac{t_\mathrm{max}}{m}$, there is some $N\in\nats$ such that, for all $m>N$, $\discretestep<\Delta$. Fix any such $m>N$. 

Recall that $\discretetranscount{xy}$ is the number of transitions from $x$ to $y$ in $\partobsdisc$. Thus,
\begin{equation*}
\discretetranscount{xy} = \sum_{j=1}^{m}\ind{x}\bigl(\partobsdisc(j-1)\bigr)\ind{y}\bigl(\partobsdisc(j)\bigr)\,.
\end{equation*}
Let $\mathcal{I}_{xy}$ consist of the indices $i$ of the time points $t_i$ at which the actual switches from state $x$ to $y$ occurred in $\partobs$; so, let
\begin{equation*}
\mathcal{I}_{xy} \coloneqq \Bigl\{ i\in\{1,\ldots,M\}\,:\,\partobs(t_{i-1})=x, \partobs(t_{i})=y  \Bigr\}.
\end{equation*}
Then, clearly, $n_{xy}=\lvert\mathcal{I}_{xy}\rvert$ is the true number of transitions from $x$ to $y$.

Choose any $i\in\mathcal{I}_{xy}$. Clearly, since $\discretestep<\Delta$, there is a unique $j_i\in\{1,\ldots,m\}$ such that 
\begin{equation*}
t_{i-1}\leq t_i-\Delta < (j_i-1)\discretestep < t_i \leq j_i\discretestep < t_i+\Delta.
\end{equation*}
Then $\partobsdisc(j_i-1)=x$ and $\partobsdisc(j_i)=y$ because $\partobs(t_{i-1})=x$ and $\partobs(t_i)=y$. Therefore $\ind{x}\bigl(\partobsdisc(j_i-1)\bigr)\ind{y}\bigl(\partobsdisc(j_i)\bigr) =1$. Because this holds for all $i\in\mathcal{I}_{xy}$, and because each $i\in\mathcal{I}_{xy}$ has a unique $j_i$, this implies that $\discretetranscount{xy}\geq \lvert\mathcal{I}_{xy}\rvert=n_{xy}$.

Conversely, it trivially holds that $n_{xy} \geq \discretetranscount{xy}$ because the discretisation cannot introduce more state switches. Therefore we must have that $\discretetranscount{xy}=n_{xy}$. Because this holds for all $m>N$, it holds that $\lim_{m\to+\infty}\discretetranscount{xy}=n_{xy}$, which concludes the proof for Property~\ref{lemma:path_props:partition_fine_enough}.

We next prove property Property~\ref{lemma:path_props:durations_converge}; choose any $x\in\states$, and recall that
\begin{equation}\label{eq:duration_integral}
d_x \coloneqq \int_0^{t_\mathrm{max}} \mathbb{I}_x\bigl(\partobs(t)\bigr)\,\mathrm{d}t\,,
\end{equation}
where $\mathbb{I}_x$ is the indicator of $\{x\}$. By Lemma~\ref{lemma:cadlag_has_finite_switches}, $\partobs$ has only finitely many discontinuities on the interval $[0,t_\mathrm{max}]$. Therefore, the composite function $\mathbb{I}_x\bigl(\partobs(t)\bigr)$ also has only finitely many discontinuities on this interval. It follows that the integral in~\eqref{eq:duration_integral} can be interpreted in the Riemann sense. 

Now consider any $m\in\nats$. Then we have
\begin{equation*}
\discretestep \discretecounts{x} = \sum_{i=0}^{m-1} \mathbb{I}_x\bigl(\partobsdisc(i)\bigr)\discretestep = \sum_{i=0}^{m-1} \mathbb{I}_x\left(\partobs\left(\frac{i}{m}t_\mathrm{max}\right)\right)\frac{t_\mathrm{max}}{m}\,,
\end{equation*}
which we see is a Riemann sum whose limit defines the integral in~\eqref{eq:duration_integral}; we immediately conclude that $d_x=\lim_{m\to_\infty}\discretestep\discretecounts{x}$, as claimed. \qed
\end{proof}

\emph{Proof of Theorem~\ref{thm:limit_of_idm}}. 
We need to show that $\lim_{m\to+\infty}\mathcal{Q}_s^{(m)}$ exists in the Painlev\'e-Kuratowski sense~\cite{rockafellar1998variational}, and that it is equal to $\mathcal{Q}_s$. This requires us to consider the inner limit of $\smash{\{\mathcal{Q}_s^{(m)}\}_{n\in\nats}}$---the set of limit points of sequences $\smash{\{Q_m\}_{m\in\nats}}$, with $\smash{Q_m\in\mathcal{Q}_s^{(m)}}$ for all $m\in\nats$---and the outer limit---the set of all accumulation points of such sequences---and to show that they are equal to each other and to $\mathcal{Q}_s$. We start by considering the inner limit.

Fix any $Q\in\mathcal{Q}_s$. It then follows from~\eqref{eq:proposed_estimator} that $Q\in\mathfrak{Q}$ and that there is some $A\in\transmatset$ such that
\begin{equation}\label{eq:prooftheorem5:1}
Q(x,y)=\frac{sA(x,y)+n_{xy}}{d_x}
\text{ for all $x,y\in\states$ such that $x\neq y$.}
\end{equation}
Since $A\in\transmatset$, we know that there must be sequence $\{A_m\}_{m\in\nats}\in\mathrm{int}(\mathfrak{T})$ such that $\lim_{m\to+\infty}A_m=A$. Consider any such sequence.

For all $m\in\nats$, we now let $T_m$ be the element of $\smash{\mathcal{T}^{(m)}_s}$ that corresponds to $A_m$, and we let $\smash{Q_m\coloneqq(T_m-I)\nicefrac{1}{\delta^{(m)}}}$ be the corresponding element of $\smash{\mathcal{Q}^{(m)}_s}$. Consider now any $x,y\in\states$ such that $x\neq y$. For all $m\in\nats$, we then find that
\begin{equation}\label{eq:prooftheorem5:2}
Q_m(x,y) = \frac{s A_m(x,y) + \discretetranscount{xy}}{\discretestep s+\discretestep n_x^{(m)}} - \frac{I(x,y)}{\discretestep}
= \frac{s A_m(x,y) + \discretetranscount{xy}}{\discretestep s+\discretestep n_x^{(m)}},
\end{equation}
because $x\neq y$ implies $I(x,y)=0$. Furthermore, we also know that
\begin{equation*}
\begin{array}{lll}
\lim_{m\to+\infty} A_m(x,y) = A(x,y), & \quad\quad & \lim_{m\to+\infty} \discretetranscount{xy} = n_{xy}, \\
\lim_{m\to+\infty} \discretestep s = 0, &  & \lim_{m\to+\infty} \discretestep \discretecounts{x} = d_x,
\end{array}
\end{equation*}
making use of Lemma~\ref{lemma:path_props} for the equalities that involve $n_{xy}$ and $d_x$. Therefore, the numerator and denominator converge separately, and we find that
\begin{equation*}
\lim_{m\to+\infty} Q_m(x,y) = \frac{sA(x,y) + n_{xy}}{d_x}=Q(x,y).
\end{equation*}
It remains to look at the diagonal elements. Fix any $x\in\states$. Then
\begin{align*}
\lim_{m\to+\infty}Q_m(x,x) &= \lim_{m\to+\infty}-\sum_{y\in\states\setminus\{x\}}Q_m(x,y)\\
&=-\sum_{y\in\states\setminus\{x\}}\lim_{m\to+\infty}Q_m(x,y)
=-\sum_{y\in\states\setminus\{x\}}Q(x,y)=Q(x,x),
\end{align*}
where the first equality follows from the fact that each $Q_m$ is a transition rate matrix, the third equality follows from our earlier result that $Q_m(x,y)$ converges to $Q(x,y)$, and the last equality follows because $Q$ is a transition rate matrix.
We conclude that $\lim_{m\to+\infty}Q_m=Q$. Since $\smash{Q_m\in\mathcal{Q}^{(m)}_s}$ for all $m\in\nats$, this implies that $\smash{Q\in \liminf_{m\to+\infty} \mathcal{Q}^{(m)}_s}$, where $\smash{\liminf_{m\to+\infty} \mathcal{Q}^{(m)}_s}$ is the inner limit of $\smash{\{\mathcal{Q}_s^{(m)}\}_{n\in\nats}}$. Because this holds for all $Q\in\mathcal{Q}_s$, we conclude that
\begin{equation}\label{eq:subset_inner_limit}
\mathcal{Q}_s\subseteq \liminf_{m\to+\infty}\mathcal{Q}^{(m)}_s.
\end{equation}

Next, consider any element $Q\in \limsup_{m\to+\infty}\mathcal{Q}^{(m)}_s$ of the outer limit. By definition, there is then a sequence $\smash{\{Q_m\}_{m\in\nats}}$, with $\smash{Q_m\in\mathcal{Q}_s^{(m)}}$ for all $m\in\nats$, and a subsequence $\smash{\{Q_{m_\ell}\}_{\ell\in\nats}}$, such that $\lim_{\ell\to+\infty}Q_{m_\ell} = Q$. 

For every $m\in\nats$, since $Q_{m}\in\mathcal{Q}^{(m)}_s$, we know that there is some $A_{m}\in\mathrm{int}(\transmatset)$ that satisfies~\eqref{eq:prooftheorem5:2} for all $x,y\in\states$ such that $x\neq y$. Furthermore, because $\{A_m\}_{m\in\nats}\subseteq\mathrm{int}(\transmatset)\subset\transmatset$ and $\transmatset$ is compact, it follows from the Bolzano-Weierstrass theorem that the sequence $\{A_{m_\ell}\}_{\ell\in\nats}$ has a convergent subsequence whose limit belongs to $\transmatset$. Hence, without loss of generality, we can assume that $\lim_{\ell\to+\infty}A_{m_\ell}=A$, with $A\in\transmatset$.

Using completely analogous argumentation as that in the first part of this proof, it now follows that $Q=\lim_{\ell\to+\infty}Q_{m_\ell}$ satisfies~\eqref{eq:prooftheorem5:1}. It follows that the off-diagonal elements of $Q$ are real-valued and, similar to what we found above, that the rows of $Q$ sum to zero; hence, the diagonal elements are real-valued as well. Therefore $Q$ is a transition rate matrix and, since it satisfies~\eqref{eq:prooftheorem5:1}, this implies that $Q\in\mathcal{Q}_s$. Since $\smash{Q\in\limsup_{m\to+\infty}\mathcal{Q}^{(m)}_s}$ was arbitrary, we conclude that $\smash{\limsup_{m\to+\infty}\mathcal{Q}^{(m)}_s \subseteq\mathcal{Q}_s}$.
Since the inner limit is trivially included in the outer one, the result now follows from~\eqref{eq:subset_inner_limit}.\qed

\end{document}